\newcommand{\citeasnoun}[1]{\citet{#1}}
\renewcommand{\cite}[1]{\citep{#1}}
\newcommand{\Real}{\mathbb{R}}
\newcommand{\Natural}{\mathbb{N}}
\newcommand{\one}{\mathbf{1}}
\newtheorem{theorem}{Theorem}[section]
\newtheorem{proposition}{Proposition}[section]
\newtheorem{lemma}{Lemma}[section]
\newcommand{\states}{\mathcal{S}}
\newcommand{\actions}{\mathcal{A}}
\newcommand{\pulls}{T}
\newcommand{\plotfig}[3]{%
\begin{figure}[t]
	\centering
	\hspace*{-18pt}\includegraphics[width=3.75in]{#1}
	\caption{#2}
    \label{#3}
\end{figure}
}
\newcommand{\opt}{^\star}
\newcommand{\E}[2][]{\mathbb{E}_{#1}\left[\,#2\,\right]}
\DeclareMathOperator{\betaD}{Beta}
\renewcommand{\P}[1]{\mathbb{P}\left[\,#1\,\right]}
\DeclareMathOperator{\regret}{Regret}
\DeclareMathOperator{\bayregret}{BayesRegret}
\newcommand{\ucb}{\text{UCB}}
\newcommand{\gittins}{\text{Gitt}}
\newcommand{\iopt}{i\opt_u}
\newcommand{\aopt}{a_{\iopt}}
\newcommand{\muopt}{\mu_{\iopt}}
\title{Value Directed Exploration in Multi-Armed Bandits \\
with Structured Priors}
\author{Bence Cserna \hspace{16pt} Marek Petrik \hspace{16pt} Reazul Hasan Russel \hspace{16pt} Wheeler Ruml \\ University of New Hampshire \\ Durham, NH 03824 USA \\ {\tt bence},  {\tt mpetrik}, {\tt rrussel}, {\tt ruml} at {\tt cs.unh.edu}}
\begin{document}

\maketitle
	
\begin{abstract}

Multi-armed bandits are a quintessential machine learning problem requiring the balancing of exploration and exploitation. While there has been progress in developing algorithms with strong theoretical guarantees, there has been less focus on practical near-optimal finite-time  performance. In this paper, we propose an algorithm  for Bayesian multi-armed bandits that utilizes value-function-driven online planning techniques.  Building on previous work on UCB and Gittins index, we introduce linearly-separable value functions that take both the expected return and the benefit of exploration into consideration to perform n-step lookahead. The algorithm enjoys a sub-linear performance guarantee and we present simulation results that confirm its strength in problems with structured priors. The simplicity and generality of our approach makes it a strong candidate for analyzing more complex multi-armed bandit problems.

\end{abstract}

\section{Introduction}

In the multi-armed bandit setup, a decision-maker repeatedly chooses from a finite set of actions. A reward is generated independently from a probability distribution associated with the action. The underlying reward distribution for each action is unknown to the decision-maker, but each action-reward pair can further inform future choices.  Strong performance in this setup critically depends on the balance between exploring less well-understood actions and exploiting actions thought to provide high reward.  In this sense, the problem captures the quintessence of the interplay between learning and decision-making.

Many approaches to the multi-armed bandit problem have achieved impressive theoretical and empirical results~\cite{Bubeck2012,Kuleshov2014}. There is, however, growing recognition that more wide-spread practical use will require algorithms that can better exploit structured prior information~\cite{Russo2014,Lattimore2016}. For example, consider a bandit problem in which the arms represent different levels of customer discounts such as $5\%$ or $10\%$. The conversion probabilities for the discounts are not known before the promotion starts, but one can safely assume that more customers choose to buy the product when offered a $10\%$ rather than a $5\%$ discount. Such prior information should ideally be used to increase the efficiency of exploration in particular when a large number of discounts is considered. While in some problems such prior information can be captured using parametric models, like GLM-UCB~\cite{Filippi2010}, such models make many additional assumptions that are difficult to verify when little or no data is available. 

In this paper, we propose a Bayesian bandit algorithm designed to use structured prior information in order to achieve good short-term performance with a small number of arm pulls. We take an approach based on online planning, using lookahead search to consider the states to which each possible sequence of actions might lead. Relying on lookahead search makes it possible to easily exploit structured prior information when it is available. Because the state space grows exponentiaith the number of arms, it is impossible to enumerate all reachable states to the problem horizon. Thus, this formulation of the problem relies crucially on having a value function that can be applied at a modest depth cut-off in lieu of further state search.

Many methods for computing approximate value functions have been developed in reinforcement learning and have, in fact, been used to solve some multi-armed bandit problems~\cite{Whittle1988,Adelman2008}. But they can be computationally intensive and cumbersome to use. As our main contribution, we propose in \cref{sec:ucb_method} a new method for computing \emph{linearly-separable value functions} that, when used in concert with lookahead search, performs as well as state-of-the-art-algorithms. The method computes value functions by exploiting existing algorithms and the weakly-coupled property of multi-armed bandit problems. It also enjoys sublinear regret as we show in \cref{sec:regret_analysis}. Our algorithm is simple to implement and, as we demonstrate in \cref{sec:experiments}, performs well in bandit problems with structured prior information.  

Given the fundamental simplicity of our approach and its empirical success, we are optimistic that it may provide a basis for addressing more complex problems, such as contextual bandits, in the future. Our approach also opens the door to bandit algorithms that can yield improved performance when additional computation time is available.

\section{Background}

We begin by describing the Bayesian multi-armed bandit problem in more detail.  We focus on the case of Bernoulli bandits, deferring discussion of more complex models to \cref{sec:use_case}.  We then briefly review previously proposed algorithms before turning to our new method.

\subsection{Problem Formulation} \label{sec:problem_formulation}

The decision-maker in the standard multi-armed bandit problem aims to maximize the cumulative return by repeatedly choosing one of $N$ arms: $\actions = \{a_1, \ldots, a_N \}$. Choosing an arm $a_i$ results in receiving a reward $R_i \in \{0,1\}$ distributed according to a Bernoulli distribution with a mean $\mu_a$. The mean $\mu_a$ is not known in advance. To achieve the maximal cumulative return over a horizon of $T$ steps, the decision-maker must balance exploration to learn about the expected returns of arms with exploitation in order to learn which arms are more likely to provide high rewards.
 
In the Bayesian variant of the problem, the decision-maker has access to a prior distribution over the expected reward $\mu_1, \mu_2, \ldots, \mu_N$ for each arm $a_1, a_2, \ldots, a_N$.
We use $\mu = (\mu_1, \ldots, \mu_N)$ to represent the prior parameters of the bandit; each $\mu_i$ is distributed according to a Beta distribution. As in most machine learning settings, the Bayesian approach has both advantages and disadvantage ---a proper discussion is beyond the scope of this paper and we refer to \citeasnoun{Kaufmann2012a,Russo2014b,Kim2015} and the references therein for details.

\begin{figure*}[t]
	\centering
	\begin{tikzpicture}[
	sibling distance=20pt,
	level 1/.append style={level distance=40pt},
	level 2/.append style={level distance=60pt},
	edge from parent/.append style={->,draw,>=stealth'}, align=center,
	state/.style = {shape=rectangle,draw, align=center,fill=blue!20},
	action/.style = {shape=rectangle,draw, rounded corners=2mm, align=center,fill=green!20}]

	\Tree[ .\node[state] {$(\alpha_1 , \beta_1) ~\Vert~ (\alpha_2 , \beta_2) $}; 
	 	[
	 	.\node[action] {Pull arm $a_1$}; 
	 		\edge node {$R=1$\\$P=\frac{\alpha_1}{\alpha_1+\beta_1}$};
	 		[.\node[state] {$(\alpha_1+1, \alpha_2) ~\Vert~ (\beta_1 , \beta_2)$};  \node {\ldots};]
	 		\edge node {$R=0$\\$P=\frac{\beta_1}{\alpha_1+\beta_1}$};
	 		[.\node[state] {$(\alpha_1 , \beta_1 + 1) ~\Vert~ (\alpha_2 , \beta_2)$};  \node {\ldots};]
	 	]
	 	[
	 	.\node[action] {Pull arm $a_2$};
	 		\edge node {$R=1$\\$P=\frac{\alpha_2}{\alpha_2+\beta_2}$};
	 		[.\node[state] {$(\alpha_1 , \beta_1) ~\Vert~ (\alpha_2 + 1 , \beta_2)$};  \node {\ldots};]
	 		\edge node {$R=0$\\$P=\frac{\beta_2}{\alpha_2+\beta_2}$};	 		
	 		[.\node[state] {$(\alpha_1 , \beta_1) ~\Vert~ (\alpha_2 , \beta_2+1)$};  \node {\ldots};]
	  	]
	]	
	\end{tikzpicture}
	\caption{A single transition of the Bernoulli multi-armed bandit problem.} \label{fig:state_illustration}
\end{figure*}
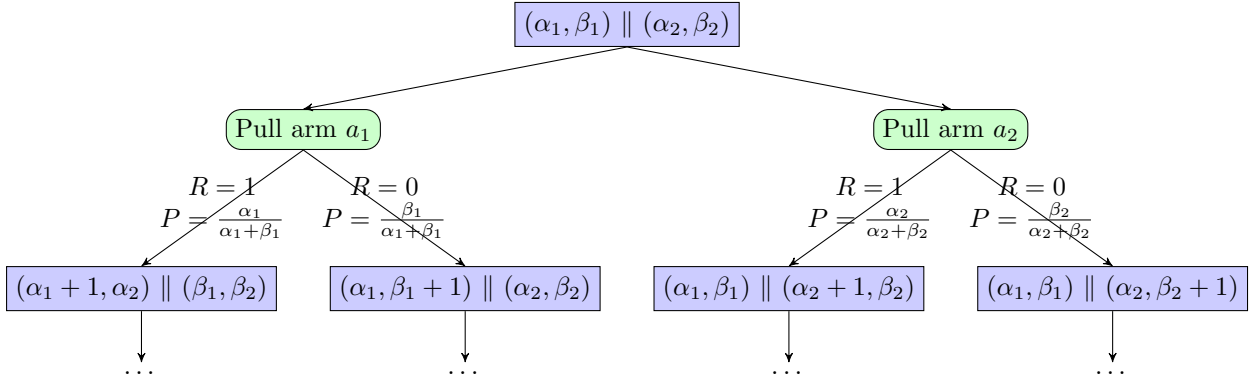

The Bayesian multi-armed bandit problem can be modeled as a Markov Decision Process~(MDP).  The state in this MDP represents the sufficient statistic of the history of the observed rewards for each arm. We denote the state space of the MDP that represents the multi-armed bandit problem as $\states = \states_1 \, \cup \,\states_2\, \cup \ldots \cup \,\states_T$, where $\states_t$ is the set of states at time $t$. The actions in this MDP are simply the pulls of arms of the bandit.  \cref{fig:state_illustration} shows a fragment of the MDP for a two-arm bandit, illustrating the transition from one state to the four possible successors.

In order for the Markov property to hold, each state of the MDP must represent the posterior distribution of $\mu_i$ given the history of rewards for every arm $a_i$. The posterior parameter $\mu_i$ is distributed according to the Beta distribution $\betaD(\alpha,\beta)$ with some parameters $\alpha,\beta$ because it is the conjugate prior to Bernoulli distribution~(e.g., \cite{Gittins2011}). Any state $s\in\states$  can, therefore, be represented as \[ s = \Bigl((\alpha_1, \beta_1), (\alpha_2,\beta_2), \ldots, (\alpha_N,\beta_N) \Bigr) ~,\]
where $\alpha_i,\beta_i$ represent the Beta distribution parameters for each arm $i$. We use $\alpha_i^0, \beta_i^0$ to denote the parameters of the prior Beta distributions and generally assume that $\alpha_i^0 = \beta_i^0 = 1$ which corresponds to the uniform prior.  The parameters $\alpha,\beta$ of the Beta distribution have a convenient interpretation: after observing $n_s$ successes (value 1) and $n_f$ failures (value 0) for $R_i$ then $\mu_i \sim \betaD(\alpha_i^0 +n_s, \beta_i^0 + n_f)$. Thus the transition, after pulling an arm $a_i$, consists of merely adding one to the appropriate $\alpha_i$ or $\beta_i$ based on the observed reward.

When the bandit is in state $s_t$ and the decision maker chooses arm $a_i$, the subsequent state is represented by a random variable $S_{t+1}(s_t,a_i)$. When  $s_t = (\ldots, (\alpha_i,\beta_i), \ldots)$, this random variable is distributed as
\begin{align}
\label{eq:success_prob}
\P{S_{t+1} = (\ldots, (\alpha_i+1,\beta_i), \ldots)} &= \frac{\alpha_i}{\alpha_i+\beta_i}~, \\
\label{eq:fail_prob}
\P{S_{t+1} = (\ldots, (\alpha_i,\beta_i+1), \ldots)} &= \frac{\beta_i}{\alpha_i+\beta_i} ~,
\end{align}
where the transition probabilities follow from the definition of the mean of the Beta distribution. To reduce clutter, we omit $s_t$ and $a_i$ when they are obvious from the context. The rewards received in transitions  \eqref{eq:success_prob} and \eqref{eq:fail_prob} are 1 and 0 respectively.

By specifying which arm to pull at any given state, a multi-armed bandit algorithm defines a policy for this bandit MDP. The decision rule at time $t$ is denoted as $\pi_t: \states_t \to \actions$ and the policy is the collection of the decision rules $\pi = \{ \pi_t \mid t=1\ldots T\}$ for each time step $t$.  The celebrated Gittins index defines the optimal policy for the discounted infinite-horizon version of the bandit problem, but that method is not based on directly solving the MDP. In most practical problems, it is impossible to compute the optimal policy because the number of states grows exponentially with the number of arms.  Unfortunately, while Gittins index may be optimal, it provably does not generalize to most other bandit problems.

The established performance measure for classic bandit algorithms is the regret, sometimes referred to as pseudo-regret~\cite{Bubeck2012}, which is defined for a particular realization of the bandit parameters $\mu$ and policy $\pi$ as
\[ \regret(\pi,T,\mu) = \sum_{t=1}^T \Bigl( \max_{i=1\ldots N} \mu_i - \E{R_{\pi(S_t)} \mid \mu} \Bigr)~, \] 
where $S_t$ is a random variable that represents the state of the bandit process. In Bayesian bandits, it is natural to instead evaluate Bayesian regret:
\[  
\bayregret(\pi, T) = \E[\mu]{\regret(T,\mu)}~.
\]
We aim to minimize Bayesian regret with a particular focus on the regret in the first few steps. While the guarantees provided by a small bound on the Bayesian regret are somewhat weaker than that of regular regret, it is a very reasonable measure in most circumstances.

\subsection{Previous Work}

The literature on bandit problems is enormous, so we will focus on just the most relevant algorithms. The UCB family of algorithms \cite{Auer2002} use the problem structure to derive tight optimistic upper bounds.  While these algorithms are simple and have been used in various applications with success, they lack the ability to incorporate structured prior information such as arm dependency or different reward policies without requiring complex and difficult re-analysis of the upper bounds. \citeasnoun{Kaufmann2012a} propose Bayes-UCB, a Bayesian index policy that improves on UCB in Bayesian bandits by taking advantage of the prior distribution. 

\citeasnoun{Russo2014} describes an approach to addressing the limitations of the optimistic approach that serves as the basis for the UCB family of algorithms. They describe and method that considers not only the immediate single-period regret but also the information gain to learn from the partial feedback and to optimize the exploration-exploitation trade online. They provide a strong Bayesian regret bound that applies for a general class of models. Our new method is based on a similar principle but uses and additive value functions to estimate the information gain.

Thompson sampling works by choosing an arm based on its probability of being the best arm.  Concretely, the method draws a sample from the decision maker's current belief distribution for each arm and then chooses the arm that yielded the highest sample. The performance of Thompson sampling has been proved to be near optimal, and it is simple and efficient to implement. Thompson sampling can easily be adapted to a wide range of problem structures and prior distributions~\cite{Agrawal2011,Leike2016,Kaufmann2012}.  For example, one can reject sets of samples that contradict contextual information.  However, the simplicity of the method makes it also difficult to improve its performance.

Gittins indices exploit the weak dependence between actions to compute the optimal action in time that is linear in the number of arms~\cite{Gittins1979,Chakravorty2014}. Gittins indices, however, are guaranteed to be optimal only for the basic multi-armed bandit problem, require a discounted infinite-horizon objective, and provably cannot be extended to most interesting and practical problems which involve correlations between arms or an additional context~\cite{Gittins2011}.

\section{Constructing Value Functions} \label{sec:ucb_method}

We now turn to our new approach, which we call ``ELSV'' (Exploration via Linearly Separable Value Functions). As described above, our main goal is a method that is flexible and takes advantage of the complex problem structure or prior knowledge in order to reduce the regret. We achieve this by taking a state-space search-based approach and by leveraging the exploration-exploitation trade-off behavior of existing algorithms to build good value functions. 

The ELSV algorithm, as described in this section, does not improve the performance of existing methods when applied to Bernoulli bandits. In \cref{sec:experiments}, we describe how it can be extended easily to settings with prior information in which it significantly outperforms state-of-the-art methods.

\begin{algorithm}  
	\KwIn{Current time step $t$, current state $s_t$, and value function $v_t: \states_t \rightarrow \Real$}
	\KwOut{Arm to pull at time step $t$} 
	\For{$a\in\actions$}{
		$q_t(s_t,a) \gets \E{r(s_t,a,S_{t+1}) + v_{t+1} (S_{t+1})}$ \;
	}
	\Return{$\arg\max_{a\in\actions} q_t(s_t,a)$ }\;
	\caption{One-step lookahead algorithm}    \label{alg:lookahead}
\end{algorithm}

Our general approach is based on an $n$-step lookahead guided by a specific value function. This is an instance of receding horizon control, a common approach to solving online planning and reinforcement learning problems~\cite{Sutton2016}.  The state space is enumerated to depth $n$, at which point a value function is evaluated at the frontier states to avoid further expansion.  Note that, because there are multiple ways of reaching a state in the MDP, the state space forms a graph and it is important to detect and merge duplicate states.  The values are backed up to the current state at the root and the best-looking action is chosen.  After the outcome of pulling the arm is observed, the cycle repeats again with a fresh lookahead.  A simplified version of the algorithm, depicted in \cref{alg:lookahead}, estimates the value of each action by computing the expected value for the next step.

Since \cref{alg:lookahead} does not rely on any complex confidence bounds, one would expect that it can easily generalize to many different problems. Choosing a longer lookahead horizon also offers the promise of trading off computational time for more efficient exploration. The quality of this algorithm will clearly depend on whether it is supplied with a good value function $v$.
 
There has been little previous work that considered a value function-driven approach to bandit problems, with \citeasnoun{Adelman2008} being one notable exception. This is perhaps because UCB is simple and efficient, while computing value functions via approximate dynamic programming requires complex computation and can be unreliable. We show, however, that it is possible to \emph{efficiently} construct good value functions directly from UCB and other popular bandit algorithms. Surprisingly, such value functions are simple and linearly separate over the individual arms. 

Before describing how we construct the value function, consider what it is supposed to represent. Consider, for example, a state $s=((2,3),(10,10))$ in a two-arm bandit problem with $10$ steps remaining until the end of the horizon $T$ is reached. The expected returns of the two arms are $\nicefrac{2}{5}\cdot 10 = 4$ and $\nicefrac{1}{2} \cdot 10 = 5$ respectively. One could simply assign $v(10) = \max\{ 4, 5\} = 5$, but this would not be precise. The first arm, while apparently having a lower expected mean, is far less certain than the second arm (because of a smaller number of pulls). In order to achieve good results, and in particular a sub-linear regret guarantee, the value function must consider not only the expected return but also the confidence of the estimates. Another way to put it is that the value function must model both the expected return (exploitation) and the benefit of exploration.

\begin{algorithm}
	\KwIn{Current time step $t$, current state $s_t$, and index function $z_t: \states_t \times \actions \rightarrow \Real$}
	\KwOut{Arm to pull at time step $t$} 	
	\Return{$\arg\max_{a\in\actions} z_t(s_t^i,a_i)$ }\;
   	\caption{Index Policy} \label{alg:index}
\end{algorithm} 

We seek to take advantage of the implicit value of exploration that is encoded by existing multi-armed bandit index algorithm. \cref{alg:index} shows a canonical example of an index-based algorithm. UCB, Gittins index, and many other methods fit this basic mold. Note that the index $z_t(s_t^i, a_i)$ is computed for each arm separately. The notation  $s_t^i$ denotes the component of state $s_t$ that corresponds to arm $a_i$, that is $s_t^i = (\alpha_i,\beta_i)$. For example, if $s_t = ((5,2),(4,6))$ then $s_t^2 = (4,6)$. An important property of the index function $z_t(s_t^i,a_i)$ is that it is completely independent of the states of other arms and can thus be computed efficiently. 

The challenging part of constructing an index algorithm is, obviously, how to define the index value $z_t$.  
If we use the Bayesian expectation of the immediate return in place of the standard frequentist one, then the value of the index for the $\alpha$-UCB algorithm for each arm $a\in\actions$ is
\begin{small}
\begin{equation} \label{eq:ucb_index}
\begin{aligned}
z_t^{\ucb}(s_t^i,a_i) &= r(s_t^i, a_i) + \sqrt{ \frac{\alpha \log t}{\pulls_i(s_t^i)} } \\
&= \underbrace{r(s_t^i, a_i)}_{\text{Immediate reward}} + \underbrace{b_t^{\ucb}(s_t^i,a_i)}_{\text{Exploration bonus}}
\end{aligned}
\end{equation}
\end{small}
where $r(s_t^i,a_i) = \E{r(s_t^i,a_i,S_{t+1}^i)} = \nicefrac{\alpha_i}{\alpha_i+\beta_i}$ is the expected reward after pulling arm $a_i$, $\pulls_i(s_t^i) = \alpha_i + \beta_i - 2$ is the number of times the arms has been pulled (recall that the initial states is $\alpha_i^0 = \beta_i^0 = 1$), and $b_t^{\ucb}$ is the exploration bonus. 

The classic UCB algorithm uses $\alpha=2$, but sub-linear regret can be in fact shown with $\alpha > 1$~\cite{Bubeck2012}. Lower values of $\alpha$ typically lead to better empirical performance but also make it more difficult to bound the regret. We use $\alpha=1$ unless otherwise specified. 

Another celebrated example of an index policy is the Gittins index, see for example~\cite{Gittins2011}. While UCB is asymptotically optimal (up to a constant factor), following the Gittins index results in an optimal policy in several simple bandit settings. For example, Gittins indices are optimal for an infinite-horizon discounted Bernoulli bandit problem. We generally use discount $\gamma = 0.99$ and the horizon of $1000$ when approximate the infinite horizon.

Unlike UCB, Gittins index does not have a closed-form expression, but it instead needs to be precomputed. Since the index is computed for each arm independently, it can be computed and used efficiently regardless of the number of arms in the bandit problem~\cite{Nino-Mora2011}. We use $z^\gittins_t(s_t^i,a)$ to denote the value of the Gittins index and $b^\gittins_t(s^i_t,a_i) = z^\gittins_t(s^i_t,a_i) - r(s_t^i, a_i)$ to denote the exploration benefit that it assigns to the arms.  Many other multi-armed bandit methods have been proposed, some examples are Bayes-UCB~\cite{Kaufmann2012a} or UCB-V~\cite{Audibert2009} to name a few.

We are now ready to describe ELSV, the new method for constructing linearly-separable value functions. A linearly-separable value function for components $\upsilon_t^i: \states_t^i \rightarrow \Real$ is such that, for each $t\in\mathcal{T}$ and for each state $s_t\in \states_t$,
\begin{equation} \label{eq:separable}
v_t(s_t) = \sum_{a\in\actions} \upsilon_t^i(s_t^i)~.
\end{equation}
Linearly separable value functions are attractive due to their simplicity and have been used widely in reinforcement learning and approximate dynamic programming~\cite{Powell2008,Powell2004,Rust1994} and in previously for approximating the value function in bandit problems~\cite{Adelman2008}.

We begin with an arbitrary bandit index function $z_t^i$ and the corresponding exploration bonus function $b_t^i$. Each component $\upsilon_t^i$ must satisfy the following condition for every $a_i$, $t$, and $\tau\le t$:
\begin{small}
\begin{equation} \label{eq:dynamic_update}
\begin{aligned}
\upsilon_{t+1}^i(s_\tau^i) &= \E{\upsilon_{t+1}^i(S_{\tau+1}^i)} + r(s_\tau^i,a_i) - z_t(s_\tau^i,a_i) \\
&=\E{\upsilon_{t+1}^i(S_{\tau+1}^i)} - b_t(s_\tau^i,a_i) ~,
\end{aligned}
\end{equation}
\end{small}
where $S_{\tau+1}^i$ is short for $S_{\tau+1}^i(s_\tau,a_i)$, the random variable that represents the state following the pull of arm $a$. It is important to note that all $\upsilon$'s involved use the same time index $t+1$ while the states are over two time steps $\tau$ and $\tau+1$.

To understand the requirement in \eqref{eq:dynamic_update} more intuitively, we can rewrite it as
\[ \E{\upsilon_{t+1}^i(S_{\tau+1}^i)} - \upsilon_{t+1}^i(s_\tau^i) = b_t(s_\tau^i,a_i) ~. \]
The term $\E{\upsilon_{t+1}^i(S_{\tau+1}^i)}$ represents the expected value of the state at time $t+1$ after pulling the arm $a_i$. In contrast, the term $\upsilon_{t+1}^i(s_\tau^i)$ represents the expected value of the state at time $t+1$ when arm $a_i$ is not pulled. The difference between these two terms is the change in the value of the current state, or in other words, how much we have learned about the arm after pulling it. And this increase in information about the arm should be equal to the exploration bonus assigned by the index.

\begin{algorithm}
	\KwIn{Arm $a_i$, time step $t\in 1,\ldots,T$, and exploration bonus function $b_t: \states_t \times \actions \rightarrow \Real$}
	\KwOut{Value functions $\upsilon^i_1, \ldots, \upsilon^i_T$ for arm $a_i$}	

   	$\states_t \leftarrow \Bigl\{ (\alpha,\beta) \in \Natural_{>0}^2 ~\vert~ \alpha + \beta - 2 \le t-1 \Bigr\}$ \;	
   	$\upsilon_t^i(s) \leftarrow 0$ \quad $\forall s\in \states_t$\;
	\For{$\tau = t-2$ \KwTo $1$}{
       	\ForEach{$(\alpha,\beta) \in \{s\in\states_\tau ~\vert~ \alpha+\beta-2 = \tau \}$}{
       		$p \leftarrow \frac{\alpha}{\alpha+\beta}, \quad q \leftarrow \frac{\beta}{\alpha+\beta}$ \;
       		$\upsilon_t^i(\alpha,\beta) \leftarrow  p\cdot \upsilon_t^i(\alpha+1,\beta) + q \cdot \upsilon_t^i(\alpha,\beta+1) - b_t(\alpha, \beta)$ \;
        } 	
	}
	\Return{$\upsilon_t^i$ }
	\caption{ELSV: Computing linearly separable value functions that satisfy \eqref{eq:dynamic_update}.} \label{alg:value_computation}
\end{algorithm} 

\begin{theorem} \label{thm:value_property}
Let $\pi_I$ represent \cref{alg:index} with index function $\hat{z}_t$. Suppose a policy $\pi_V$ represents \cref{alg:lookahead} with value function $\hat{v}_t$ as defined in \eqref{eq:separable}--\eqref{eq:dynamic_update} using an index function $\hat{z}_t^i$. Then $\pi_V(s_t) = \pi_I(s_t)$ for each $s_t\in\states_t$.
\end{theorem}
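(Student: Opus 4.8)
The plan is to show that, for a fixed state $s_t$, the one-step lookahead $Q$-value of \cref{alg:lookahead} differs from the index value of \cref{alg:index} only by a quantity that does not depend on which arm is pulled; since $\arg\max$ is invariant under adding an arm-independent constant, the two policies must select the same arm. The key structural facts I would exploit are (i) the value function $\hat v_{t+1}$ is linearly separable over arms as in~\eqref{eq:separable}, and (ii) pulling arm $a_j$ changes only the $j$-th component of the state, leaving $s_t^i$ for $i\neq j$ deterministic.

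First I would expand the $Q$-value. Fixing an arm $a_j$ and writing out the expectation, the expected immediate reward collapses to $r(s_t^j,a_j)=\nicefrac{\alpha_j}{\alpha_j+\beta_j}$ by the Bernoulli reward structure, and the separable value splits as
\[
\E{\hat v_{t+1}(S_{t+1})} = \sum_{i\neq j}\upsilon_{t+1}^i(s_t^i) + \E{\upsilon_{t+1}^j(S_{t+1}^j)},
\]
because only the $j$-th component is stochastic. Next I would invoke the defining relation~\eqref{eq:dynamic_update} specialized to $\tau=t$, namely $\E{\upsilon_{t+1}^j(S_{t+1}^j)} = \upsilon_{t+1}^j(s_t^j) + b_t(s_t^j,a_j)$, to replace the expected future component by its current value plus the exploration bonus. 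Collecting terms and using $z_t = r + b_t$ from~\eqref{eq:ucb_index}, this yields
\[
q_t(s_t,a_j) = \hat z_t(s_t^j,a_j) + \sum_{i}\upsilon_{t+1}^i(s_t^i).
\]

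The final step is the observation that the trailing sum $\sum_i \upsilon_{t+1}^i(s_t^i)$ is identical for every choice of arm $a_j$, since it is evaluated at the current state $s_t$ before any pull. Hence $q_t(s_t,a_j)$ and $\hat z_t(s_t^j,a_j)$ order the arms identically, and their argmaxes (including any ties) coincide, giving $\pi_V(s_t)=\pi_I(s_t)$. I do not expect a serious obstacle here: the result is essentially a cancellation argument, and the only real care needed is bookkeeping the time indices---keeping all $\upsilon$'s at the fixed index $t+1$ while the states range over $\tau$ and $\tau+1$, and correctly specializing~\eqref{eq:dynamic_update} to $\tau=t$. One should also confirm that the reward term in the lookahead reduces to exactly the same $r(s_t^j,a_j)$ that enters the index, so that the cancellation is exact rather than merely approximate.
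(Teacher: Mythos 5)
Your proposal is correct and follows essentially the same route as the paper's proof: both expand $q_t$ using the linear separability of $\hat v_{t+1}$, cancel the components of unpulled arms (whose states are unchanged), apply \eqref{eq:dynamic_update} to convert the pulled arm's expected value gain into the exploration bonus, and conclude that $q_t(s_t,a_j)$ and $\hat z_t(s_t^j,a_j)$ differ by the arm-independent constant $\hat v_{t+1}(s_t)$. The only cosmetic difference is that you state the identity as $q_t = \hat z_t + \hat v_{t+1}(s_t)$ while the paper subtracts $\hat v_{t+1}(s_t)$ from $q_t$; these are the same cancellation argument.
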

We defer the proof of \cref{thm:value_property} to \cref{sec:proof_value_property}. It follows by comparing the value of pulling an arm with the value of a hypothetical state which would have resulted from not pulling any arm. The argument relies on the fact that a policy is not affected by adding or subtracting a constant from the value function for all states in $\states_t$ for any $t$.

The simplest arm index would simply ignore the exploration bonus by setting it to $b_t(s_t^i,a_i) = 0$. The value function for this setting will be a constant and the 1-step lookahead policy will simply be greedy. Having no exploration bonus, therefore, leads to no exploration and pure exploitation.

The value function in \cref{thm:value_property} induces the same policy as the index, but it is still an approximation of the true value of each state. Therefore, a value function $\upsilon$ constructed from the Gittins index will lead to the optimal policy (for the discounted infinite-horizon bandit) but it is not the optimal value function. 

\cref{alg:value_computation} describes a dynamic programming method that can be used to compute a value function that satisfies \eqref{eq:dynamic_update}. The following proposition, which can be shown readily by algebraic manipulation, states the complexity of the algorithm.
\begin{proposition} \label{prop:computable}
The linearly separable value function $\upsilon_t^i$ can be computed using dynamic programming with computational complexity $O(t^3)$. 
\end{proposition}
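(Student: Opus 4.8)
The plan is to read the complexity directly off \cref{alg:value_computation} by a straightforward counting argument: bound the number of states the dynamic program visits, multiply by the constant work done at each, and sum.

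First I would characterize the state space. Grouping the states of $\states_t$ by the number of pulls $\tau = \alpha + \beta - 2$, each \emph{level} $\tau$ consists of the lattice points with $\alpha + \beta = \tau + 2$ and $\alpha, \beta \ge 1$, of which there are exactly $\tau + 1$. Summing over the admissible levels $\tau = 0, 1, \ldots, t-1$ gives
\[ |\states_t| = \sum_{\tau=0}^{t-1} (\tau + 1) = \frac{t(t+1)}{2} = O(t^2)~. \]
Next I would account for the work per state. The backward sweep visits each state exactly once, and the update for $(\alpha,\beta)$ reads the two already-computed successor values $\upsilon_t^i(\alpha+1,\beta)$ and $\upsilon_t^i(\alpha,\beta+1)$, forms $p$ and $q$, and subtracts a single evaluation of the bonus $b_t$; treating $b_t$ as a constant-time oracle (as the algorithm's input signature assumes), each update is $O(1)$. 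The $O(t^2)$ initialization loop is dominated by the same bound, so assembling one value function $\upsilon_t^i$ costs $O(t^2)$.

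The cubic factor then comes from the fact that the procedure must return the entire family $\upsilon_1^i, \ldots, \upsilon_T^i$ of value functions, one per horizon length, rather than a single one. Since building $\upsilon_\tau^i$ costs $O(\tau^2)$ by the above, the total is $\sum_{\tau=1}^{t} O(\tau^2) = O(t^3)$, matching the claim, under the convention that $t$ here denotes the largest horizon for which a value function is produced.

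I expect the only real obstacle to be bookkeeping rather than mathematics: being explicit about whether the $O(t^3)$ refers to one value function or to the whole collection $\{\upsilon_\tau^i\}$, and justifying that each query to $b_t$ is $O(1)$. The latter holds immediately for the closed-form UCB bonus in \eqref{eq:ucb_index}; for the Gittins bonus, which has no closed form, I would note that the index is precomputed once and stored, so its lookups are constant-time and the precomputation cost is accounted for separately.
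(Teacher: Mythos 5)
Your counting argument is correct and supplies exactly the detail the paper omits: there is no written proof of \cref{prop:computable}, only the remark that it ``can be shown readily by algebraic manipulation,'' and the intended manipulation is precisely your level-by-level count of $\sum_{\tau=0}^{t-1}(\tau+1)=O(t^2)$ states with $O(1)$ work each. Your further observations --- that a single $\upsilon_t^i$ therefore costs only $O(t^2)$, so the stated $O(t^3)$ is either a bound on assembling the whole family $\upsilon_1^i,\ldots,\upsilon_T^i$ (as the output line of \cref{alg:value_computation} suggests) or simply not tight, and that $b_t$ must be treated as a constant-time oracle, with the Gittins precomputation accounted for separately --- are sound and go slightly beyond what the paper states.
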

Since the values $\upsilon$ are computed independently for each $t$ they do not have to be pre-computed ahead of time but can be computed on as needed basis and only for the states relevant to choosing an action. It is also important to note that the complexity in \cref{prop:computable} is independent of the number of arms and the complexity of the 1-step lookahead in \cref{alg:lookahead} is linear in the number of arms. ELSV can scale to a large number of arms with no significant difficulties.

\begin{figure*}
	\centering	
	\includegraphics[width=0.45\linewidth]{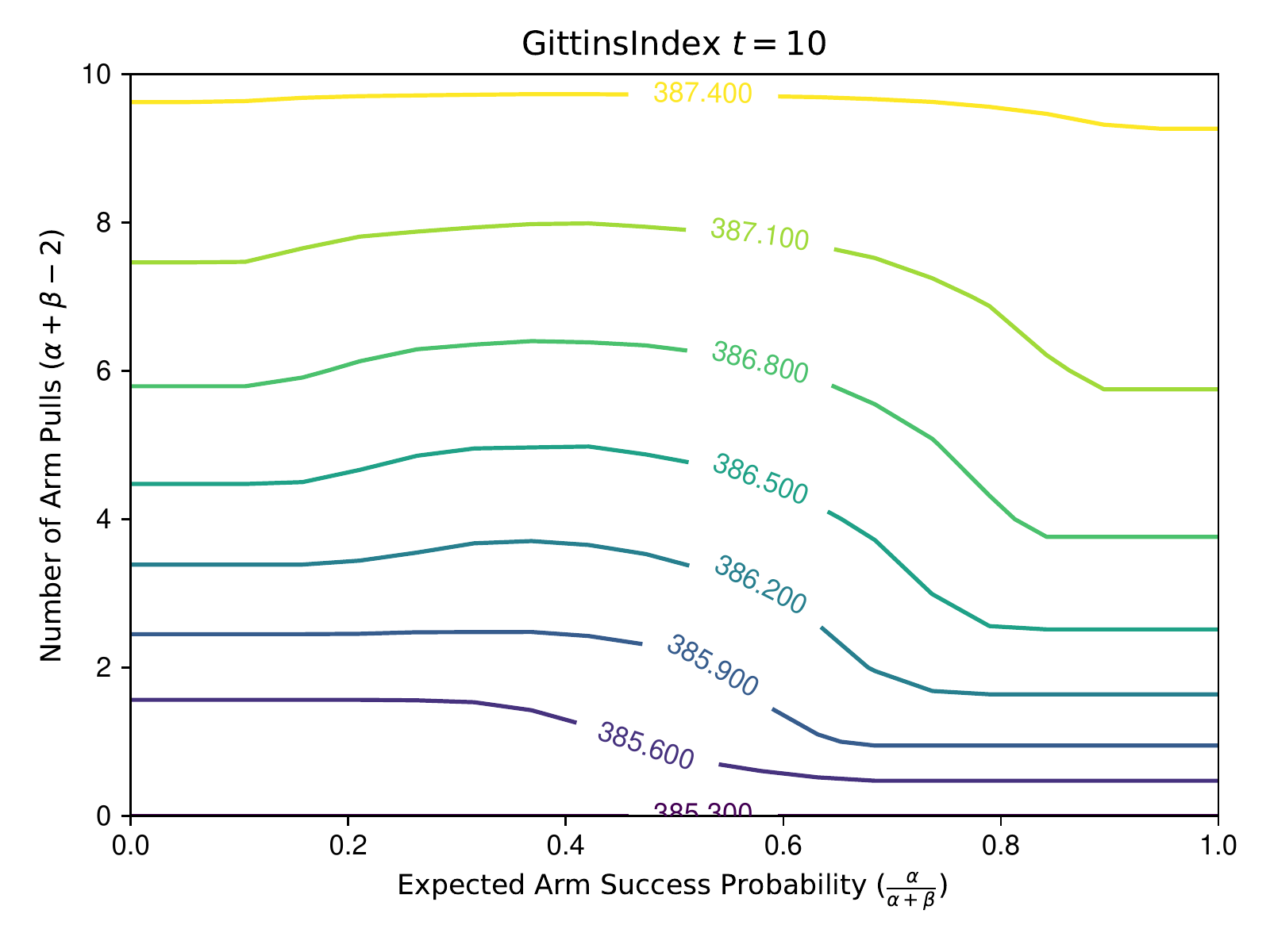}
	\hfill
	\includegraphics[width=0.45\linewidth]{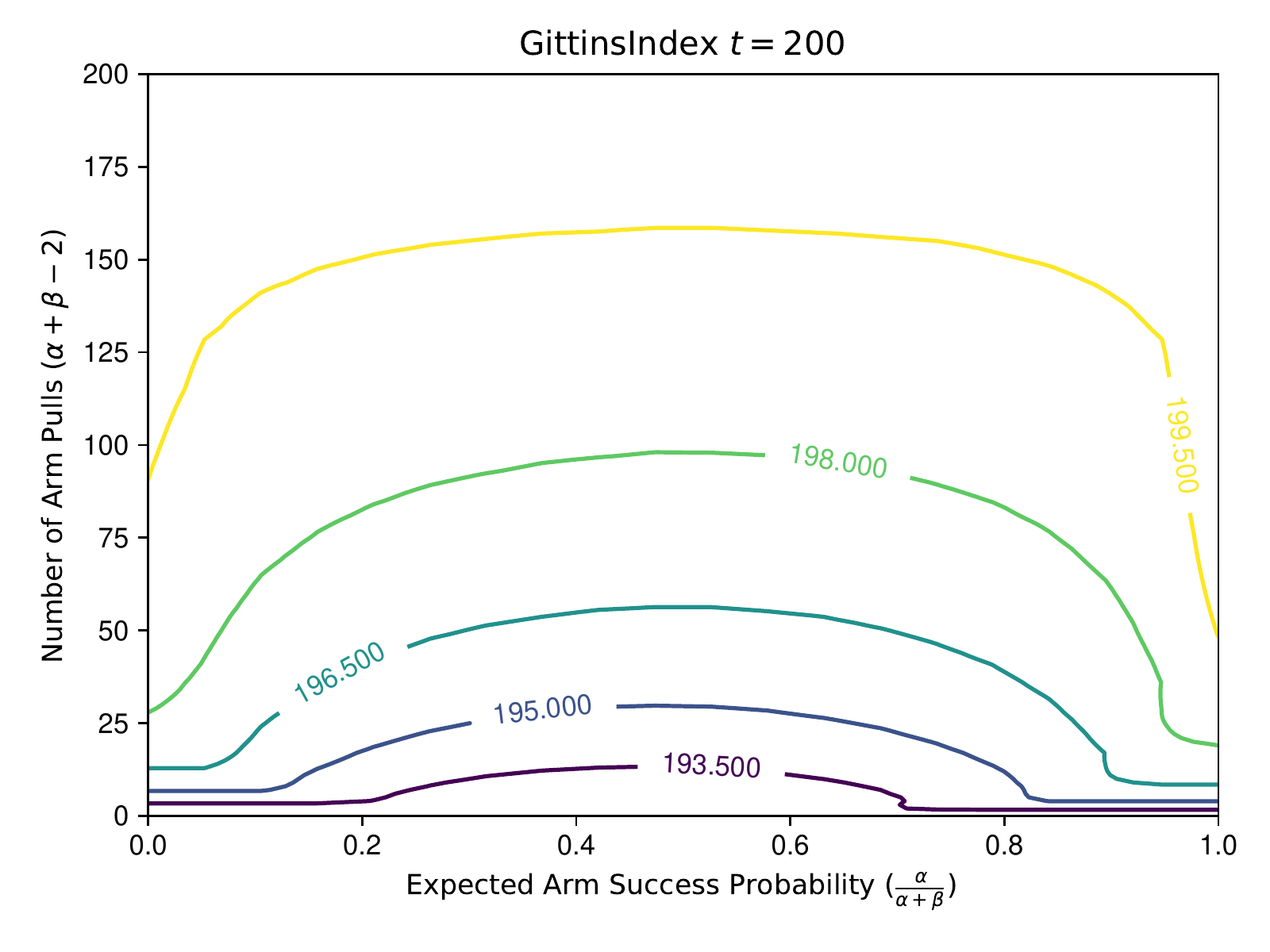}
	\caption{Value functions for Gittins index at $t=10$ and $t=200$ (after $10$ and $200$ pulls of some arm). } \label{fig:value_functions_git}
\end{figure*}

\begin{figure}
	\centering
	\includegraphics[width=0.7\linewidth]{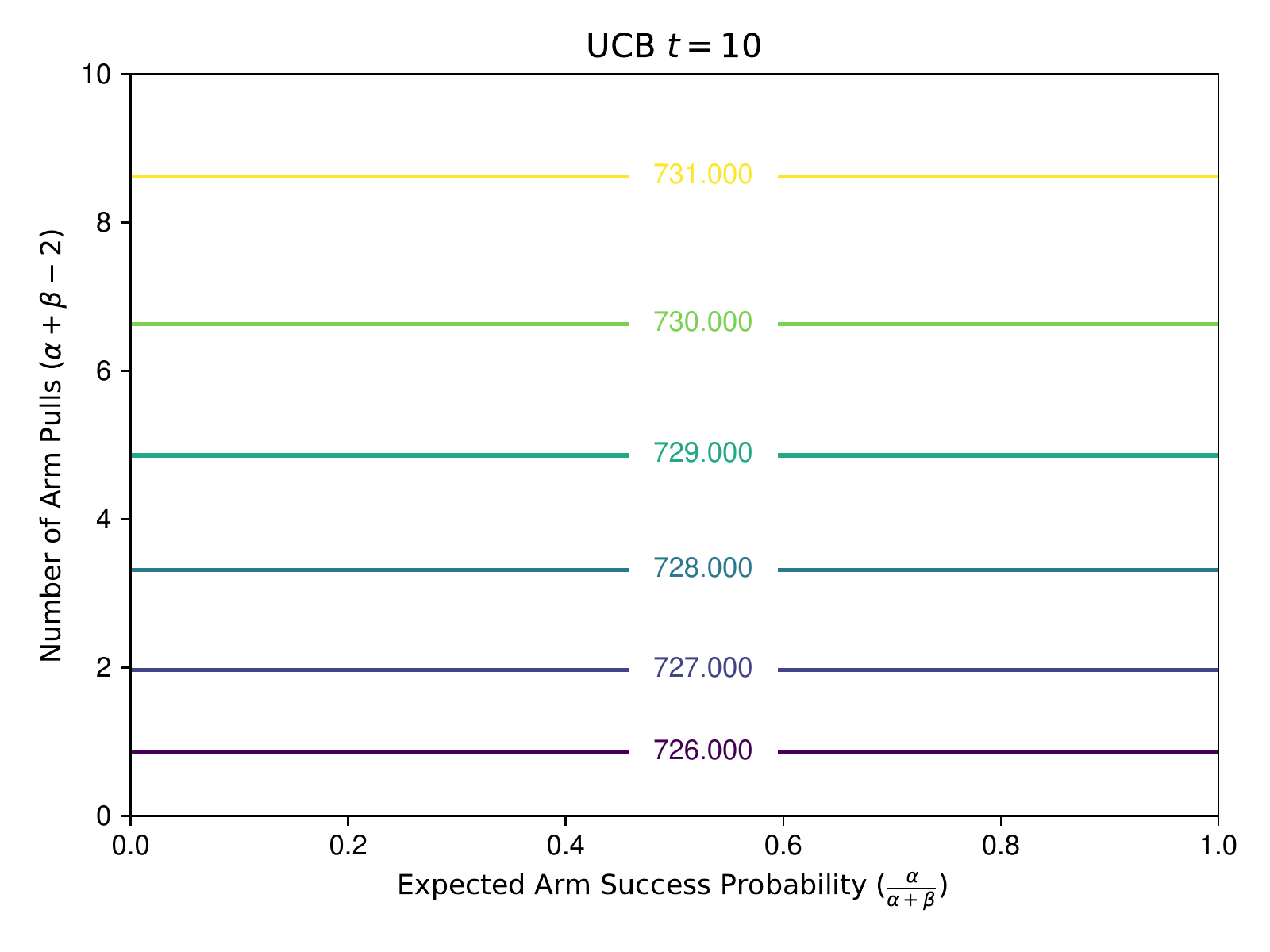}
	\caption{Value function for UCB index at $t=10$  (after $10$ pulls of some arm). } \label{fig:value_functions_ucb}
\end{figure}

\cref{fig:value_functions_git} depicts value functions computed by ELSV for Gittins index at $t=10$ and $t=200$ for each state. The states in $\states$ are mapped to a 2-dimensional space and the contours indicate the value function at that state. The number of arm pulls on the vertical axis can be smaller than $t$ since the arm may not be pulled in every time step. \cref{fig:value_functions_ucb} shows the value function computed by ELSV for UCB index for comparison. As noted above, the constant offset of the value functions is irrelevant to the quality of the policy. The value functions in the plots are offset to satisfy,
\[ \upsilon^i_t(s_t) \ge \max \left\{r(s_t^i,a_i) + \E{\upsilon_{t+1}^i(S_{t+1}^i)}, \upsilon_{t+1}^i(s_t^i) \right\}~, \]
which leads to more reasonable value estimates without influencing the policy.

The value function of UCB is notably simpler than the one for Gittins index. As expected, the UCB value function is \emph{concave} and increasing but independent of the expected success probability. This indicates that exploration in UCB is really driven just by the immediate reward and the certainty in it---the potential long-term benefits of the arm are ignored. On the other hand, the value function for the Gittins index value exhibits a curious structure: it increases toward both low and high probabilities. This is counterintuitive as one would expect the value function to monotonically increase with the success probability. In a multi-armed bandit, however, it is also valuable to learn that an arm is not good which reduces the need for further exploration. Arms with medium success probabilities do not provide high rewards and yet require significant exploration. Notice also that this property is much more exaggerated at $t=200$.

In the remainder of the paper, we discuss regret bounds ELSV and evaluate its performance experimentally on a bandit problem with additional problem structure.

\section{Analysis of Regret} \label{sec:regret_analysis}

In this section, we prove that ELSV with the UCB value function has sublinear regret. The sublinear bound on the regret is not surprising; \cref{thm:regret} shows that ELSV with such a value function behaves identically to UCB which enjoys sublinear regret bounds. Instead, the main goal of this section is to establish a new methodology that can be used to analyze regret of multi-armed bandit algorithms driven by value functions.

Our goal is, in particular, to derive regret bounds that depend on some property of the value function used by ELSV. We need additional notation to describe such a property concisely. Let $q_t(s,a)$ stand for the expected value after pulling an action $a$ in state $s_t$:
\[ q_t(s_t,a) = r(s_t,a) + \E{v_{t+1}\Bigl(S_{t+1}(s_t,a)\Bigr)} ~.\]
As we show below, to bound regret it is sufficient to establish an upper bound on:
\begin{equation} \label{eq:residual_term}
\varphi_t(\mu,s_t,a_i) = q_t(s_t,a_i) - (\mu_i + v_{t+1}(s_t))~.
\end{equation} 
This value $\varphi_t$ compares the estimate of the expected value $q_t(s_t,a_i)$ with a more precise estimate of the same value $(\mu_i + v_{t+1}(s_t))$. The more precise estimate uses the unknown parameter $\mu$. One could also interpret $\varphi_t$ as a finite-horizon form of the Bellman residual used in bounds on performance loss in reinforcement learning~(e.g. \cite{Petrik2011b}).  

The following lemma plays a key role in establishing the regret bounds. It shows how the regret of ELSV can be decomposed into two components: one is independent of the algorithm and the other one is independent of the optimal action.
\begin{lemma} \label{lem:decomposition}
The regret for any policy $\pi_t$ computed using 1-step lookahead with respect to value function $v_t$ is upper bounded as follows:
\begin{gather*}
\regret(\pi, T, \mu) \le \sum_{t=1}^T \E[S_t]{\varphi_t(\mu,S_t,\pi(S_t))} - \\
- \sum_{t=1}^T \E[S_t]{\varphi_t(\mu,S_t,a_{i_\mu\opt})} ~,
\end{gather*}
where  $S_t$ is the random variable representing the state at time $t$ under policy $\pi$ and $i\opt_\mu = \arg\max_{i \in \actions} \mu_i$ is the optimal action for the unknown parameter $\mu$.
\end{lemma}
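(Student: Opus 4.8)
The plan is to prove the inequality termwise and then sum: for each time step $t$ and each realized state $S_t$ I would show that the difference of the two residual terms dominates the instantaneous pseudo-regret $\mu_{i\opt_\mu} - \mu_{\pi(S_t)}$, after which summing over $t$ and taking expectations recovers $\regret(\pi,T,\mu)$. The observation that makes this clean is that both $\varphi$ terms in the statement are evaluated at the \emph{same} random state $S_t$ drawn from the trajectory induced by $\pi$; only the action argument differs (the played action $\pi(S_t)$ versus the counterfactual optimal action $a_{i\opt_\mu}$). This lets me combine them inside a single expectation before doing any estimation.

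Concretely, I would expand the difference using the definition \eqref{eq:residual_term}. Since the baseline term $v_{t+1}(S_t)$ does not depend on the action, the two copies cancel exactly, leaving
\[ \varphi_t(\mu,S_t,\pi(S_t)) - \varphi_t(\mu,S_t,a_{i\opt_\mu}) = \bigl(q_t(S_t,\pi(S_t)) - q_t(S_t,a_{i\opt_\mu})\bigr) + \bigl(\mu_{i\opt_\mu} - \mu_{\pi(S_t)}\bigr)~. \]
I would then invoke the fact that $\pi$ is the $1$-step lookahead (greedy) policy for $q_t$, namely $\pi(S_t) = \arg\max_a q_t(S_t,a)$, so the first parenthesized term is nonnegative. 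Hence the whole expression is bounded below by $\mu_{i\opt_\mu} - \mu_{\pi(S_t)}$, which is exactly the instantaneous regret incurred at $S_t$.

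To finish, I would take the expectation over $S_t$ on both sides and sum over $t=1,\ldots,T$. Using $\E{R_{\pi(S_t)} \mid \mu} = \E[S_t]{\mu_{\pi(S_t)}}$ (the reward is Bernoulli with mean $\mu_{\pi(S_t)}$ conditioned on the chosen arm, so this is the tower rule) together with $\max_i \mu_i = \mu_{i\opt_\mu}$, the summed lower bound is precisely $\regret(\pi,T,\mu)$, yielding the claimed bound.

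The argument is essentially a bookkeeping identity plus a single use of greedy optimality, so I do not expect a genuine technical obstacle. The one point to handle carefully is the shared trajectory distribution: the inequality $q_t(S_t,\pi(S_t)) \ge q_t(S_t,a_{i\opt_\mu})$ is valid only because the counterfactual residual is evaluated along the states actually \emph{visited by} $\pi$, rather than along a trajectory generated by always playing $a_{i\opt_\mu}$. I would therefore keep the expectation subscripts consistently tied to this single distribution of $S_t$, so the per-step inequality integrates without any distribution mismatch.
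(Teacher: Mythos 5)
Your proof is correct and follows essentially the same route as the paper's: both arguments reduce to the single observation that $v_{t+1}(S_t)$ cancels between the two residuals and that greedy optimality of the $1$-step lookahead gives $q_t(S_t,\pi(S_t)) \ge q_t(S_t,a_{i\opt_\mu})$, after which summing and taking expectations over the trajectory of $\pi$ yields the bound. The only difference is presentational --- you lower-bound the difference of the two $\varphi$ terms directly, whereas the paper adds and subtracts $q_t(S_t,\pi(S_t)) - v_{t+1}(S_t)$ inside the regret sum and then applies the same inequality.
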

The proof of the lemma, deferred to \cref{sec:regret_decomposition_proof}, follows readily from the fact that ELSV chooses actions that maximize $q_t$. 

The actual regret bound depends, of course, on the value function that is used. We now use \cref{lem:decomposition} to bound the regret of the policy that uses UCB derived value function $v^\ucb$ as described in \eqref{eq:dynamic_update}. 

\begin{theorem} \label{thm:regret}
The regret of policy $\pi_U$ of \cref{alg:lookahead} that uses $\hat{v}^\ucb$ (with $\alpha = 2$) is bounded as:
\[ \regret(\pi_U, T, \mu ) \le O(\sqrt{T \log(T) })~.  \]
\end{theorem}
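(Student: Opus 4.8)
The plan is to show that the residual $\varphi_t$ from \eqref{eq:residual_term} collapses to exactly the UCB confidence width, after which \cref{lem:decomposition} reduces the claim to the classical UCB regret analysis.

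First I would evaluate $\varphi_t$ in closed form. Fix a state $s_t$ and arm $a_i$. Since $\hat v^{\ucb}_{t+1}$ is linearly separable by \eqref{eq:separable} and pulling $a_i$ changes only the $i$-th coordinate,
\[ \E{\hat v^{\ucb}_{t+1}(S_{t+1}(s_t,a_i))} = \sum_{j\neq i}\upsilon^j_{t+1}(s_t^j) + \E{\upsilon^i_{t+1}(S^i_{t+1})} . \]
Applying the defining relation \eqref{eq:dynamic_update} with $\tau = t$, namely $\E{\upsilon^i_{t+1}(S^i_{t+1})} = \upsilon^i_{t+1}(s_t^i) + b^{\ucb}_t(s_t^i,a_i)$, the coordinates recombine into $\hat v^{\ucb}_{t+1}(s_t) + b^{\ucb}_t(s_t^i,a_i)$. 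Substituting this into the definitions of $q_t$ and then $\varphi_t$, the value-function terms cancel and I obtain
\[ \varphi_t(\mu,s_t,a_i) = r(s_t^i,a_i) + b^{\ucb}_t(s_t^i,a_i) - \mu_i = z^{\ucb}_t(s_t^i,a_i) - \mu_i , \]
i.e.\ $\varphi_t$ is precisely the gap between the UCB index of arm $a_i$ and its true mean.

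Second I would substitute this into \cref{lem:decomposition}. Writing $\pi_t$ for the arm ELSV pulls at $S_t$ (which coincides with the UCB choice by \cref{thm:value_property}), the bound becomes
\[ \regret(\pi_U,T,\mu) \le \sum_{t=1}^T \E{\bigl(z^{\ucb}_t(S_t^{\pi_t},\pi_t) - \mu_{\pi_t}\bigr) + \bigl(\mu_{i\opt_\mu} - z^{\ucb}_t(S_t^{i\opt_\mu}, a_{i\opt_\mu})\bigr)} , \]
the standard two-term UCB decomposition: an exploration-bonus term for the pulled arm and a confidence-violation term for the optimal arm. I would then bound the two families by concentration. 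For the optimal-arm term, a one-sided Hoeffding bound on the posterior mean $r=\alpha_i/(\alpha_i+\beta_i)$ shows the good event $z^{\ucb}_t(S_t^{i\opt_\mu},a_{i\opt_\mu}) \ge \mu_{i\opt_\mu}$ fails with probability $\le t^{-2\alpha}=t^{-4}$ at $\alpha=2$; on the good event the term is nonpositive, and summing failure probabilities over $t$ (with the usual union over the pull count) contributes only $O(1)$. For the pulled-arm term I split $z^{\ucb}_t - \mu_{\pi_t} = (r-\mu_{\pi_t}) + b^{\ucb}_t$, where $(r-\mu_{\pi_t})$ again concentrates to $O(1)$ and the bonus sum is grouped per arm,
\[ \sum_{t=1}^T b^{\ucb}_t \le \sqrt{\alpha \log T}\,\sum_i \sum_{n=1}^{\pulls_i} n^{-1/2} \le 2\sqrt{\alpha\log T}\,\sum_i \sqrt{\pulls_i} \le 2\sqrt{\alpha\log T}\,\sqrt{N\,\pulls} , \]
with $\pulls_i$ the number of pulls of arm $i$ and the last step by Cauchy--Schwarz. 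Collecting the pieces gives $\regret(\pi_U,T,\mu) \le O(\sqrt{T\log T})$, with the arm count $N$ absorbed into the constant.

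The main obstacle I anticipate is not the counting but the Bayesian estimator: the index uses the Laplace-smoothed posterior mean $\alpha_i/(\alpha_i+\beta_i)$ rather than the raw empirical frequency, and $\pulls_i$ is itself a random, history-dependent quantity. I would need to verify that the $O(1/\pulls_i)$ smoothing bias and the adaptivity of the counts perturb the concentration bounds only at lower order---most cleanly by applying the Hoeffding/union step uniformly over the possible values of $\pulls_i$ and folding the smoothing bias into the exploration bonus---so that the $\sqrt{T\log T}$ rate is unaffected. (One could alternatively invoke \cref{thm:value_property} to identify $\pi_U$ with UCB and cite its known bound, but the point of this section is to exhibit the value-function decomposition itself.)
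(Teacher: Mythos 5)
Your proposal follows essentially the same route as the paper's proof: it invokes \cref{lem:decomposition}, identifies $\varphi_t(\mu,s_t,a_i) = z^{\ucb}_t(s_t^i,a_i) - \mu_i$ via the linear separability \eqref{eq:separable} and the defining relation \eqref{eq:dynamic_update}, bounds the optimal-arm term to $O(1)$ by Hoeffding with a union bound over pull counts, and bounds the pulled-arm term by summing the exploration bonuses. Your Cauchy--Schwarz grouping of the bonus sum and your explicit flagging of the Laplace-smoothing bias in $r(s_t^i,a_i)=\alpha_i/(\alpha_i+\beta_i)$ are minor refinements over the paper's cruder counting, but the argument is the same one.
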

The proof of the theorem can be found in \cref{sec:thm_regret_proof}. Note that this bound is not tighter than existing bounds on UCB algorithms, but it does establish sublinear regret of ELSV.

To establish the bound in \cref{thm:regret}, it is necessary that the difference of the residuals \eqref{eq:residual_term} for the arm chosen by ELSV and the optimal arm are not only small but must also decrease at least quickly $1/\sqrt{t}$. In other words, it is not sufficient for the errors in \eqref{eq:residual_term}
to be small, they also must decrease as more information about the returns of the arms becomes available.

\section{Experimental Results} \label{sec:experiments}

In this section, we compare the performance of ELSV to that of UCB, Bayes-UCB, Thompson sampling, and Gittins indices in simulation. We first analyze in \cref{sec:bernoulli_bandits} the impact of the lookahead horizon on the performance in the plain Bernoulli bandit setting. Then, in \cref{sec:use_case}, we describe the application to a problem in which structured prior information is available.

Unlike UCB and Thompson sampling, Gittins index must be pre-computed in advance. It is also optimal only for infinite-horizon discounted problems. The index that we use in our experiments was computed with a discount factor $\gamma=0.99$ and horizon of $1000$ to approximate the infinite-horizon value. We computed the index using the calibration method with the step size of $0.001$ as described, for example, in \citeasnoun{Nino-Mora2011}. Other approximations for computing the index have been proposed recently \cite{Gutin2016,Lattimore2016a}.

\subsection{Bernoulli Bandits} \label{sec:bernoulli_bandits}

Our first set of experiments is in the standard Bernoulli bandit setting is described above in \cref{sec:problem_formulation}. As shown in \cref{thm:value_property}, ELSV's performance will be identical to the index algorithm it is based upon. And our experiments indeed confirm this fact.



\plotfig{fig/lookahead-regret}{Bernoulli bandit regret with lookahead of 1 and 3 steps.}{fig:lookahead}

The main purpose of the experimental evaluation in this section is to understand the effect of the size of the lookahead on the performance of the algorithm. A reasonable assumption in online planning algorithms is that their performance generally improves with an increasing horizon size. \cref{fig:lookahead} compares the Bayesian regret of 1-step and 3-step lookaheads in a 3-armed problem with value functions computed by ELSV. We use two value functions, one computed from the Gittins index and another one from $\alpha$-UCB with $\alpha=0.4$ (this $\alpha$ is unrelated to the $\alpha$ value used in each state). The results are averaged over 5200 problem instances with arm success probabilities drawn from the uniform $\betaD$ distribution and the shaded areas around the curves show $95\%$ confidence intervals. 

\cref{fig:lookahead} highlights a surprising finding: longer lookahead does not reduce the regret. We observed virtually no improvement in the regret up to a horizon 10 at which point the search becomes computationally intractable. We hypothesize that a more careful, focused, and deeper search would be more likely to yield improvements.

\subsection{Constrained Bernoulli Bandits} \label{sec:use_case}

The constrained Bernoulli bandit problem represents a more challenging case that is not handled well by existing algorithms.  As described in the introduction, this problem is motivated by an application when trying to optimizing the level of \emph{personalized} discount offers to customers in an e-commerce setting. It has been studied extensively in operations research using customer choice models~\cite{Train2003}. Although such choice models can be combined with UCB methods, their application with no historical data is often problematic. For the purpose of these experiments, we simply assume that the success probabilities do not increase with a \emph{decreasing} discount percentage: $\mu_1 \geq \mu_2 \geq \dotsc \geq \mu_N$. Arm $i+1$ represents a \emph{smaller} discount than arm $i$: the experimental results use discount levels of $20\%,10\%,0\%$ for arms $1,2,3$ respectively. Our approach can be used also with a much more complex set of possible prior assumptions.  

Unlike in regular Bernoulli bandits, the rewards $r_i$ depend on arm $i$. That is, after pulling an arm (choosing a discount level) we received reward $r_i$ if the customer decides to purchase the product and $0$ otherwise. Since the arm discount decreases with the index $i$, the rewards satisfy: $ r_1 < r_2 < \ldots < r_N$. 
 
\begin{algorithm}  
	\KwIn{Current time step $t$, current state $s_t$, and value function $v_t: \states_t \rightarrow \Real$}
	\KwOut{Arm to pull at time step $t$} 
    $\mathcal{X} \gets \emptyset$ \\
    $(\alpha, \beta) \gets s_t$ \\
    \For{$k \in 1 \dotsc \operatorname{Sample Count}$}{
    	Sample $\theta_i \sim \betaD(\alpha, \beta) $ for each $a_i\in\actions$\\
        \If{$\theta_1 \geq \theta_2 \geq \ldots \geq \theta_n$}{
        	$\mathcal{X} \gets \mathcal{X} \cup \{(\hat{\mu}_1, \hat{\mu}_2, \dotsc, \hat{\mu}_N)\}$
        }
    }
    $\tilde{\mu}_i \gets \frac{1}{|\mathcal{X}|} \cdot \sum\limits_{\hat{\mu} \in \mathcal{X}} \hat{\mu}_i$ \tcp*{compute sample average} 
	\For{$i \in 1 \dotsc N$}{
        $q_t(s_t,a_i) \gets \tilde{\mu}_i \cdot \Big(r(s_t,a,(\alpha+1,\beta)) + v_{t+1}(\alpha+1,\beta)\Big)
        +  (1 - \tilde{\mu}_i) \cdot v_{t+1}(\alpha,\beta+1)$
	}
	\Return{$\arg\max_{a\in\actions} q_t(s_t,a)$ } \;
	\caption{Constrained single step lookahead algorithm}
    \label{alg:lookahead-constrained}
\end{algorithm}

Adapting \cref{alg:lookahead} to this constrained setting is relatively straightforward. We use the linearly separable value function computed by ELSV and only modify the lookahead to respect the constraints on $\mu$. In particular, we use \emph{rejection sampling} to appropriately adjust the transition probabilities when updating the values in the lookahead. As \cref{alg:lookahead-constrained} shows, we essentially compute the conditional probability distribution for each $\mu_i$ given the observations for that arm as well as the observations for other arms. This probability distribution must be estimated empirically as it does not have a closed form. The algorithm is only a heuristic in this setting and we have no regret bounds yet.

\plotfig{fig/constrained-regret}{Constrained Bernoulli bandit regret}{fig:constrained}

\cref{fig:constrained} shows the regret of ELSV (with 1-step lookahead) compared with several state-of-the-art algorithms on the constrained bandit problem with $3$ arms and averaged over $5300$ problem instances and with $95\%$ confidence intervals. ELSV-UCB and ELSV-Gittins use a value function computed from UCB and Gittins index respectively. We omit the regret of UCB from the plot because its regret was much higher than that of the other algorithms. TS stands for regular Thompson sampling that ignores the constraints on $\mu$, while ``TS Constrained'' samples from the constrained posterior distribution using rejection sampling similarly to \cref{alg:lookahead-constrained}. 

Our results show that ELSV-Gittins outperforms all other algorithms even in a problem with 3 arms. The magnitude of improvement in ELSV-Gittins over Gittins index grows as the number of arms in the problem increases since the constrained becomes more important and more informative. ELSV-UCB performs a bit worse, it but still represents a very significant improvement over plain UCB. 

\section{Conclusion}

We have proposed a new approach to bandit problems focused on good short-term performance in problems with structured prior information. The approach is based on a new kind of linearly separable value function that incorporates the value of exploration and can be used in concert with online planning methods. Our method, ELSV, performs close to optimal on basic Bernoulli bandits and can significantly outperform existing methods in problems with prior information. The results on simple bandit problems are promising and we hope to extend the approach also to contextual bandits. We also believe that ELSV is a good first step in developing more sophisticated value-directed methods in the future.

\bibliography{mareklib,reazullib}

\begin{thebibliography}{}

\bibitem[\protect\citeauthoryear{Adelman and Mersereau}{2008}]{Adelman2008}
Daniel Adelman and AJ~Mersereau.
\newblock {Relaxations of weakly coupled stochastic dynamic programs}.
\newblock {\em Operations Research}, 56(3):712--727, 2008.

\bibitem[\protect\citeauthoryear{Agrawal and Goyal}{2012}]{Agrawal2011}
Shipra Agrawal and N~Goyal.
\newblock {Analysis of Thompson sampling for the multi-armed bandit problem}.
\newblock In {\em Annual Conference on Learning Theory (COLT)}, pages
  39.1--39.26, 2012.

\bibitem[\protect\citeauthoryear{Audibert \bgroup \em et al.\egroup
  }{2009}]{Audibert2009}
Jean~Yves Audibert, R{\'{e}}mi Munos, and Csaba Szepesv{\'{a}}ri.
\newblock {Exploration-exploitation tradeoff using variance estimates in
  multi-armed bandits}.
\newblock {\em Theoretical Computer Science}, 410(19):1876--1902, 2009.

\bibitem[\protect\citeauthoryear{Auer \bgroup \em et al.\egroup
  }{2002}]{Auer2002}
P~Auer, N~Cesa-bianchi, and P~Fischer.
\newblock {Finite time analysis of the multiarmed bandit problem}.
\newblock {\em Machine Learning}, 47(2-3):235--256, 2002.

\bibitem[\protect\citeauthoryear{Bubeck and Cesa-Bianchi}{2012}]{Bubeck2012}
S{\'{e}}bastien Bubeck and Nicol{\`{o}} Cesa-Bianchi.
\newblock {Regret Analysis of Stochastic and Nonstochastic Multi-armed Bandit
  Problems}.
\newblock {\em Foundations and Trends in Machine Learning}, 5(1):1--122, apr
  2012.

\bibitem[\protect\citeauthoryear{Chakravorty and
  Mahajan}{2014}]{Chakravorty2014}
Jhelum Chakravorty and Aditya Mahajan.
\newblock {Multi-Armed Bandits, Gittins Index, and Its Calculation}.
\newblock In {\em Methods and Applications of Statistics in Clinical Trials},
  pages 416--435. 2014.

\bibitem[\protect\citeauthoryear{Filippi \bgroup \em et al.\egroup
  }{2010}]{Filippi2010}
Sarah Filippi, O~Cappe, A~Garivier, and C~Szepesv{\'{a}}ri.
\newblock {Parametric Bandits: The Generalized Linear Case.}
\newblock In {\em Neural Information Processing Systems (NIPS)}, 2010.

\bibitem[\protect\citeauthoryear{Gittins \bgroup \em et al.\egroup
  }{2011}]{Gittins2011}
John Gittins, Kevin Glazerbrook, and Richard Weber.
\newblock {\em {Multi-Armed Bandit Allocation Indices}}.
\newblock John Wiley {\&} Sons, 2nd edition, 2011.

\bibitem[\protect\citeauthoryear{Gittins}{1979}]{Gittins1979}
JC~Gittins.
\newblock {Bandit processes and dynamic allocation indices}.
\newblock {\em Journal of the Royal Statistical Society. Series B},
  41(2):148--177, 1979.

\bibitem[\protect\citeauthoryear{Gutin and Farias}{2016}]{Gutin2016}
Eli Gutin and Vivek~F. Farias.
\newblock {Optimistic Gittins Indices}.
\newblock In {\em Conference on Neural Information Processing Systems (NIPS)},
  2016.

\bibitem[\protect\citeauthoryear{Kaufmann \bgroup \em et al.\egroup
  }{2012a}]{Kaufmann2012a}
Emilie Kaufmann, Olivier Capp{\'{e}}, and Aur{\'{e}}lien Garivier.
\newblock {On Bayesian upper confidence bounds for bandit problems}.
\newblock {\em International Conference on Artificial Intelligence and
  Statistics}, pages 592--600, 2012.

\bibitem[\protect\citeauthoryear{Kaufmann \bgroup \em et al.\egroup
  }{2012b}]{Kaufmann2012}
Emilie Kaufmann, Nathaniel Korda, and R{\'{e}}mi Munos.
\newblock {Thompson Sampling: An Asymptotically Optimal Finite Time Analysis}.
\newblock {\em Algorithmic Learning Theory}, (1):15, 2012.

\bibitem[\protect\citeauthoryear{Kim and Lim}{2015}]{Kim2015}
Michael~Jong Kim and Andrew~E.B. Lim.
\newblock {Robust Multiarmed Bandit Problems}.
\newblock {\em Management Science}, 62(1):264--285, 2015.

\bibitem[\protect\citeauthoryear{Kuleshov and Precup}{2014}]{Kuleshov2014}
Volodymyr Kuleshov and Doina Precup.
\newblock {Algorithms for multi-armed bandit problems}.
\newblock Technical report, 2014.

\bibitem[\protect\citeauthoryear{Lattimore and
  Szepesv{\'{a}}ri}{2016}]{Lattimore2016}
Tor Lattimore and Csaba Szepesv{\'{a}}ri.
\newblock {The End of Optimism? An Asymptotic Analysis of Finite-Armed Linear
  Bandits}.
\newblock Technical report, 2016.

\bibitem[\protect\citeauthoryear{Lattimore}{2016}]{Lattimore2016a}
Tor Lattimore.
\newblock {Regret Analysis of the Finite-Horizon Gittins Index Strategy for
  Multi-Armed Bandits}.
\newblock In {\em Annual Conference on Learning Theory (COLT)}. arXiv, 2016.

\bibitem[\protect\citeauthoryear{Leike \bgroup \em et al.\egroup
  }{2016}]{Leike2016}
Jan Leike, Tor Lattimore, Laurent Orseau, and Marcus Hutter.
\newblock {Thompson Sampling is Asymptotically Optimal in General
  Environments}.
\newblock In {\em Uncertainty in Artificial Intelligence (UAI)}, 2016.

\bibitem[\protect\citeauthoryear{Ni{\~{n}}o-Mora}{2011}]{Nino-Mora2011}
Jos{\'{e}} Ni{\~{n}}o-Mora.
\newblock {Computing a classic index for finite-horizon bandits}.
\newblock {\em INFORMS Journal on Computing}, 23(2):254--267, 2011.

\bibitem[\protect\citeauthoryear{Petrik and Zilberstein}{2011}]{Petrik2011b}
Marek Petrik and Shlomo Zilberstein.
\newblock {Robust approximate bilinear programming for value function
  approximation}.
\newblock {\em Journal of Machine Learning Research}, 12(1):3027--3063, 2011.

\bibitem[\protect\citeauthoryear{Powell \bgroup \em et al.\egroup
  }{2004}]{Powell2004}
Warren Powell, Andrzej Ruszczynski, and Huseyin Topaloglu.
\newblock {Learning Algorithms for Separable Approximations of Discrete
  Stochastic Optimization Problems}.
\newblock {\em Mathematics of Operations Research}, 29(4):814--836, nov 2004.

\bibitem[\protect\citeauthoryear{Powell}{2008}]{Powell2008}
Warren~B Powell.
\newblock {Value Function Approximation using Multiple Aggregation for
  Multiattribute Resource Management}.
\newblock {\em Journal of Machine Learning Research}, 9:2079--2111, 2008.

\bibitem[\protect\citeauthoryear{Russo and Roy}{2014}]{Russo2014}
Daniel Russo and Benjamin~Van Roy.
\newblock {Learning to Optimize Via Information-Directed Sampling}.
\newblock pages 1--34, 2014.

\bibitem[\protect\citeauthoryear{Russo and {Van Roy}}{2014}]{Russo2014b}
Daniel Russo and Benjamin {Van Roy}.
\newblock {Learning to Optimize via Posterior Sampling}.
\newblock {\em Mathematics of Operations Research}, 39(4):1221--1243, 2014.

\bibitem[\protect\citeauthoryear{Rust}{1996}]{Rust1994}
John Rust.
\newblock {Numerical dynamic programming in economics}.
\newblock {\em Handbook of computational economics}, (November), 1996.

\bibitem[\protect\citeauthoryear{Sutton and Barto}{2016}]{Sutton2016}
R~S Sutton and A~G Barto.
\newblock {\em {Reinforcement Learning : An Introduction}}.
\newblock 2016.

\bibitem[\protect\citeauthoryear{Train}{2003}]{Train2003}
Kenneth~E. Train.
\newblock {\em {Discrete Choice Methods with Simulation}}.
\newblock 2003.

\bibitem[\protect\citeauthoryear{Whittle}{1988}]{Whittle1988}
P.~Whittle.
\newblock {Restless Bandits: Activity Allocation in a Changing World}.
\newblock {\em Journal of Applied Probability}, 25(1988):287, 1988.

\end{thebibliography}

\clearpage
\appendix
\onecolumn

\section{Proof of \cref{thm:value_property}} \label{sec:proof_value_property}

\begin{proof}
To prove the theorem, we need to show that \cref{alg:lookahead} will take the same action as the index policy for each state $s\in\states$ that:
\[ \arg\max_{a_i\in\actions} q_t(s,a_i) = \arg\max_{a_i\in\actions} \hat{z}_t(s^i,a_i)~. \]  
By subtracting a constant from the left-hand side, we get the following equivalent statement:
\[ \arg\max_{a_i\in\actions} \; \Bigl( q_t(s,a_i) - \hat{v}_{t+1}(s) \Bigr) = \arg\max_{a_i\in\actions} \hat{z}_t(s^i,a_i)~. \]  
Informally, the term $q_t(s,a_i) - \hat{v}_{t+1}(s)$ measures the advantage of pulling an arm $a_i$ in comparison with a virtual action of ``pulling no arm'' instead. Next we show that $q_t(s,a_i) - \hat{v}_{t+1}(s) = \hat{z}_t(s_i,a_i)$.

Fix an action $a_i$, let $s_\tau$ be some state, and let $S_{\tau+1}$ to be the random variable representing the state that follows $s_\tau^i$ after taking action $a_i$. Than:
\begin{align*}
q_t(s_\tau,a_i) - \hat{v}_{t+1}(s_\tau) &= \E{\hat{v}_{t+1}(S_{\tau+1}) + r(s_\tau,a_i,S_{\tau+1} )} - \hat{v}_{t+1}(s_\tau) = \\
&= \E{r(s_\tau,a_i,S_{\tau+1})} + \E{\hat{v}_{t+1}(S_{\tau+1})} - \hat{v}_{t+1}(s_\tau) = \\
&= r(s_\tau^i,a_i) + \E{\hat{v}_{t+1}(S_{\tau+1})} - \hat{v}_{t+1}(s_\tau) 
\end{align*}
Using the assumed linearly separable form of $\hat{v}_t(s_\tau) = \sum_{a\in\actions} \hat{\upsilon}^i_t(s_\tau^i)$, we get:
\begin{align*}
r(s_\tau^i,a_i) + \E{\hat{v}_{t+1}(S_{\tau+1})} - \hat{v}_{t+1}(s_\tau) &=  r(s_\tau^i,a_i) + \E{\sum_{a_j\in\actions} \hat{\upsilon}_{t+1}^{j}(S_{\tau+1}^{j})} - \sum_{a_j\in\actions} \hat{\upsilon}_{t+1}^{j}(s_\tau^{j}) = \\
&= r(s_\tau^i,a_i) + \sum_{a_j\in\actions} \Bigl( \E{ \hat{\upsilon}_{t+1}^{a'}(S_{\tau+1}^{j})} - \hat{\upsilon}_{t+1}^{j}(s_\tau^{j}) \Bigr)
\end{align*}
Notice next that $S_{\tau+1}^{j} = s_\tau^{j}$ whenever $a_j \neq a_i$, since the state of an arm does not change unless the arm is pulled. Thus we can further simplify the sum as follows:
\begin{gather*}
r(s_\tau^i,a) + \sum_{a_j\in\actions} \Bigl( \E{ \hat{\upsilon}_{t+1}^{j}(S_{\tau+1}^{j})} - \hat{\upsilon}_{t+1}^{j}(s_\tau^{j}) \Bigr) = \\
= r(s_\tau^i,a_i) + \sum_{a_j\in\actions \setminus \{a_i\}} \Bigl( \E{ \hat{\upsilon}_{t+1}^{j}(S_{\tau+1}^{j})} - \hat{\upsilon}_{t+1}^{j}(s_\tau^{j}) \Bigr) +  \E{ \hat{\upsilon}_{t+1}^{i}(S_{\tau+1}^{i})} - \hat{\upsilon}_{t+1}^{i}(s_\tau^{i}) = \\
= r(s_\tau^i,a_i) + \sum_{a_j\in\actions \setminus \{a_i\}} \Bigl( \E{ \hat{\upsilon}_{t+1}^{j}(s_\tau^{j})} - \hat{\upsilon}_{t+1}^{j}(s_\tau^{j}) \Bigr) +  \E{ \hat{\upsilon}_{t+1}^{i}(S_{\tau+1}^{i})} - \hat{\upsilon}_{t+1}^{i}(s_\tau^{i}) = \\
= r(s_\tau^i,a_i) + \E{ \hat{\upsilon}_{t+1}^{i}(S_{\tau+1}^{i})} - \hat{\upsilon}_{t+1}^{i}(s_\tau^{i})~.
\end{gather*}
Finally, substituting \eqref{eq:dynamic_update} into the equation above gives us:
\[q_t(s_\tau,a_i) - \hat{v}_{t+1}(s_\tau) = \hat{z}_t(s_\tau,a_i)~, \]
which proves the theorem.
\end{proof}

\section{Proof of \cref{lem:decomposition}}  \label{sec:regret_decomposition_proof}

\begin{proof}	
First, we will use the following decomposition of the regret for a given parameter $\mu$:
\begin{align*}
\regret(\pi, T, \mu) &= \sum_{t=1}^T \E{\muopt - \mu_{\pi(S_t)}} = \\
&\stackrel{\text{(a)}}{=} \sum_{t=1}^T \left(\E{\muopt - \bigl(q_t(S_t, \pi(S_t)) - v_{t+1}(S_t) \bigr)} + \E{\bigl(q_t(S_t, \pi(S_t)) - v_{t+1}(S_t) \bigr) - \mu_{\pi(S_t)}} \right) = \\
&\stackrel{\text{(b)}}{\le} \sum_{t=1}^T \left( \E{\mu_{\iopt} - \bigl(q_t(S_t, \aopt) - v_{t+1}(S_t) \bigr)} + \E{\bigl(q_t(S_t, \pi(S_t)) - v_{t+1}(S_t) \bigr) - \mu_{\pi(S_t)}} \right) = \\
&= \sum_{t=1}^T \E{\underbrace{\left(\muopt - q_t(S_t, a_{\iopt}) + v_{t+1}(S_t) \right)}_{-\varphi_t(\mu,S_t,\aopt)}} + \sum_{t=1}^T \E{\underbrace{\left(q_t(S_t, \pi(S_t)) - v_{t+1}(S_t) - \mu_{\pi(S_t)} \right)}_{\varphi_t(\mu,S_t,\pi(S_t))}} = \\
&= \sum_{t=1}^T \E[S_t]{\varphi_t(\mu,S_t,\pi(S_t))} - \sum_{t=1}^T \E[S_t]{\varphi_t(\mu,S_t,\aopt)}
\end{align*}
The inequality (a) follows by simply adding $0 = v_{t+1}(S_t) - v_{t+1}(S_t)$, and the inequality (b) follows from the optimality of $\pi(S_t)$ with respect to $v_t$:
\[ q_t(s_t,\pi(s_t)) \ge q_t(S_t, a),  \]
for any action $a\in\states$.
\end{proof}

\section{Proof of \cref{thm:regret}} \label{sec:thm_regret_proof}

The theorem follows directly from the following two lemmas.

\begin{lemma} \label{lem:proof_optimal}
For $\alpha = 0$ and $\varphi_t$ computed for $v^\ucb$ we have the following lower bound:
\[ \sum_{t=1}^T \E[S_t]{\varphi_t(\mu,S_t,\aopt)} \ge - \frac{1}{t^{2\alpha -1}} \ge O(1)~. \]
\end{lemma}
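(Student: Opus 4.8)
The plan is to reduce the optimal-arm residual $\varphi_t(\mu,s_t,\aopt)$ to a pure upper-confidence-bound quantity and then control it by concentration of measure. The proof of \cref{thm:value_property} already establishes the identity $q_t(s,a_i)-\hat v_{t+1}(s)=\hat z_t(s^i,a_i)$ for any separable value function obeying \eqref{eq:separable}--\eqref{eq:dynamic_update}. Specializing to the optimal arm and substituting into the definition \eqref{eq:residual_term} collapses $\varphi_t$ to
\[
\varphi_t(\mu,s_t,\aopt)=\bigl(q_t(s_t,\aopt)-v_{t+1}(s_t)\bigr)-\muopt=z_t^{\ucb}(s_t^{\iopt},\aopt)-\muopt=r(s_t^{\iopt},\aopt)+b_t^{\ucb}(s_t^{\iopt},\aopt)-\muopt .
\]
In words, the residual of the optimal arm is exactly its UCB index minus its true mean, so the lemma is equivalent to the claim that UCB is optimistic about the optimal arm sufficiently often.

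Next I would split on the optimism event $\{z_t^{\ucb}(S_t^{\iopt},\aopt)\ge\muopt\}$. On this event $\varphi_t\ge 0$, while on its complement $\varphi_t\ge-\muopt\ge-1$, since the expected reward and the exploration bonus are nonnegative and $\muopt\in[0,1]$. Taking expectations therefore yields the per-step bound
\[
\E[S_t]{\varphi_t(\mu,S_t,\aopt)}\ge-\,\P{z_t^{\ucb}(S_t^{\iopt},\aopt)<\muopt},
\]
so the whole argument reduces to bounding this failure-of-optimism probability.

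For the concentration step I would condition on $\mu$ (the decomposition in \cref{lem:decomposition} is stated for a fixed $\mu$), so that the rewards of arm $\iopt$ form an i.i.d.\ $\text{Bernoulli}(\muopt)$ stream regardless of the adaptive pull schedule. For a \emph{fixed} pull count $n$, Hoeffding's inequality applied to the empirical frequency $\hat\mu_{\iopt}$ gives $\P{\hat\mu_{\iopt}+\sqrt{\alpha\log t/n}<\muopt}\le\exp(-2\alpha\log t)=t^{-2\alpha}$. Because the realized count $\pulls_{\iopt}(S_t^{\iopt})$ is random and correlated with the outcomes, I would take a union bound (peeling) over the at most $t$ admissible values $n\in\{1,\dots,t\}$, obtaining $\P{z_t^{\ucb}(S_t^{\iopt},\aopt)<\muopt}\le t\cdot t^{-2\alpha}=t^{-(2\alpha-1)}$. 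Combined with the previous display this gives $\E[S_t]{\varphi_t(\mu,S_t,\aopt)}\ge-t^{-(2\alpha-1)}$, and summing with the standard $p$-series estimate $\sum_{t=1}^{T}t^{-(2\alpha-1)}\le 1+\tfrac{1}{2\alpha-2}=O(1)$ for $\alpha>1$ finishes the proof.

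The main obstacle is the concentration step, and specifically handling the \emph{adaptively chosen} random number of pulls: plain Hoeffding needs a deterministic sample size, so it is the union bound over pull counts that is the crux, and it is precisely this factor of $t$ that degrades $t^{-2\alpha}$ to $t^{-(2\alpha-1)}$ and forces $\alpha>1$ for summability. Two smaller technicalities I would dispatch along the way: the quantity $r(s_t^{\iopt},\aopt)=\alpha_{\iopt}/(\alpha_{\iopt}+\beta_{\iopt})$ is the Laplace-smoothed posterior mean rather than the raw frequency used by Hoeffding, and the case of zero pulls must be treated separately (there $b_t^{\ucb}$ is vacuously infinite and optimism holds trivially); both gaps are $O(1/n)$ and are absorbed into the exploration bonus without changing the stated rate.
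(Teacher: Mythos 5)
Your proof follows essentially the same route as the paper's: reduce $\varphi_t(\mu,S_t,\aopt)$ to $z_t^{\ucb}(S_t^{\iopt},\aopt)-\muopt$ via the index identity from \cref{thm:value_property}, lower-bound the expectation by minus the probability that the UCB index fails to be optimistic, and control that probability with Hoeffding's inequality plus a union bound over the $t$ possible pull counts, yielding the $t^{1-2\alpha}$ rate and an $O(1)$ sum. You are in fact somewhat more careful than the paper---you state the deviation event with the correct sign, treat the adaptive pull count, the posterior-mean-versus-empirical-frequency discrepancy, and the zero-pull case explicitly, and note that summability requires $\alpha>1$ (the lemma's ``$\alpha=0$'' is evidently a typo)---but the argument is the same.
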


\begin{proof}
\begin{align*}
\E[S_t]{\varphi_t(\mu,S_t,\aopt)} &= \E[S_t]{q_t(S_t, \aopt) - v_{t+1}(S_t) - \muopt} = \E{z^\ucb(S_t,\aopt) - \muopt} =\\
&= \E{r(S_t, \aopt) + \sqrt{\frac{\alpha \log t}{\pulls_{\iopt}(S_t^{\iopt},\aopt)}} - \muopt} \ge \\
&\stackrel{\text{(a)}}{\ge} - \P{r(S_t,\aopt) -\muopt \le \sqrt{\frac{\alpha \log t}{\pulls_{\iopt}(S_t^{\iopt},\aopt)}}  }  \ge \\
&\stackrel{\text{(b)}}{\ge} - t \exp\left( - 2 \pulls_{\iopt}(S_t^{\iopt},\aopt) \frac{\alpha \log t}{\pulls_{\iopt}(S_t^{\iopt},\aopt)} \right) =\\
&= - t \exp(-2\alpha \log t) = -\frac{1}{t^{2\alpha -1}}
\end{align*}	
where (a) follows by rewriting expectation and the fact that the rewards are bounded between 0 and 1, and (b) follows from Hoeffding's inequality and the union bound over possible values of $T_{\iopt}$. Summing the value above for $t = 1\ldots T$ proves the lemma.
\end{proof}

\begin{lemma} \label{lem:proof_anyarm}
Fix an action $a_i \in \actions$, and for a policy $\pi$ greedy with respect to $v^\ucb$ then:
\[ \sum_{t=1}^T \E[S_t]{\varphi_t(\mu,S_t,\pi(S_t))} \le O(\sqrt{T \log T}) \]
\end{lemma}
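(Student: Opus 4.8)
The plan is to first reduce $\varphi_t$ to a plain UCB index and then run a standard UCB-style decomposition. The proof of \cref{thm:value_property} establishes the identity $q_t(s_\tau, a_i) - \hat v_{t+1}(s_\tau) = \hat z_t(s_\tau, a_i)$ for any value function satisfying \eqref{eq:dynamic_update}. Applying this to $\hat v^{\ucb}$ and the arm $\pi(S_t)$ selected by the greedy policy gives
\[ \varphi_t(\mu, S_t, \pi(S_t)) = z_t^{\ucb}(S_t^{\pi(S_t)}, \pi(S_t)) - \mu_{\pi(S_t)} = \bigl(\hat\mu_{\pi(S_t)} - \mu_{\pi(S_t)}\bigr) + b_t^{\ucb}(S_t^{\pi(S_t)}, \pi(S_t))~, \]
where $\hat\mu_i = r(S_t^i,a_i) = \nicefrac{\alpha_i}{\alpha_i+\beta_i}$ is the posterior mean and $b_t^{\ucb} = \sqrt{\alpha \log t / \pulls_i(S_t)}$ is the exploration bonus. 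I would then bound the cumulative exploration bonus and the cumulative estimation error separately, each by $O(\sqrt{T\log T})$.

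For the exploration bonus, I would regroup $\sum_{t=1}^T b_t^{\ucb}(S_t^{\pi(S_t)}, \pi(S_t))$ by the arm pulled. Let $n_i$ be the total number of pulls of arm $a_i$ up to horizon $T$, so $\sum_i n_i = T$. Across the pulls of a fixed arm the count $\pulls_i(S_t)$ runs through $1,2,\ldots,n_i-1$ (the very first pull, with $\pulls_i=0$, forces initial exploration and contributes only $O(N)$ to the regret, so it is split off). Using $\log t \le \log T$ and $\sum_{k=1}^{m} k^{-1/2} \le 2\sqrt{m}$, the bonus accumulated on arm $a_i$ is at most $2\sqrt{\alpha \log T}\,\sqrt{n_i}$; summing over arms and applying Cauchy--Schwarz, $\sum_i \sqrt{n_i} \le \sqrt{N\sum_i n_i} = \sqrt{NT}$, so the total bonus is $O(\sqrt{NT\log T}) = O(\sqrt{T\log T})$ for constant $N$.

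The main obstacle is the estimation-error sum $\sum_t \E{\hat\mu_{\pi(S_t)} - \mu_{\pi(S_t)}}$, because the pull counts and the arm identities are themselves random. I would introduce a good event $G$ on which, for every arm $a_i$ and every possible count $n \le T$, the empirical average $\bar\mu_i^{(n)}$ after $n$ pulls satisfies $|\bar\mu_i^{(n)} - \mu_i| \le \sqrt{\alpha \log T / n}$. Hoeffding's inequality together with a union bound over the $N$ arms and the $T$ possible counts gives $\P{G^c} \le 2NT^{1-2\alpha}$, i.e. $O(T^{-3})$ when $\alpha=2$. Since the posterior mean differs from the empirical one by only $|\hat\mu_i - \bar\mu_i| \le 1/(\pulls_i+2)$, whose total over all pulls is $O(N\log T)$, on $G$ the estimation error is dominated pull-by-pull by the same $\sqrt{\alpha \log T / n}$ term as the bonus and is again $O(\sqrt{T\log T})$; off $G$ each of the $T$ summands lies in $[-1,1]$, contributing at most $T\cdot\P{G^c} = O(T^{2-2\alpha}) = O(1)$ to the expectation.

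Adding the pieces --- an $O(\sqrt{T\log T})$ exploration bonus, an $O(\sqrt{T\log T})$ estimation error on $G$, the $O(N\log T)$ prior bias, and the $O(1)$ slack off $G$ --- yields the claimed $O(\sqrt{T\log T})$ bound. The crux is entirely in the estimation-error step: turning the random, state-dependent deviation $\hat\mu_{\pi(S_t)}-\mu_{\pi(S_t)}$ into a summable $\sqrt{\log T/n}$ form requires both the reindexing of the time-sum into a per-arm pull-count sum and a concentration bound that holds uniformly over all pull counts simultaneously; everything else is routine UCB bookkeeping.
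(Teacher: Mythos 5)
Your proof is correct and follows essentially the same route as the paper's: both reduce $\varphi_t(\mu,S_t,a_i)$ to $z^{\ucb}_t(S_t^i,a_i) - \mu_i$ via the identity $q_t - v_{t+1} = z_t$ from \cref{thm:value_property}, control the cumulative exploration bonus by reindexing the time sum into per-arm pull-count sums, and control the deviation of the mean estimate by Hoeffding's inequality with a union bound over the random pull counts. The only (immaterial) difference is that you bound the deviation term on a global good event, where it contributes a second $O(\sqrt{T\log T})$, whereas the paper adds and subtracts $2\sqrt{\alpha\log t/\pulls_i(S_t^i,a_i)}$ so that its deviation term contributes only a summable $t^{1-2\alpha}$; you are also more explicit about the first pull of each arm and the prior pseudo-counts, which the paper glosses over.
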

\begin{proof}
Consider the bound for a single action:
\begin{align*}
\E[S_t]{\varphi_t(\mu,S_t,a_i)} &= \E[S_t]{q_t(S_t, a_i) - v_{t+1}(S_t) - \mu_i} = \E{z^\ucb(S_t,a_i) - \muopt} \\
&= \E{r(S_t, a_i) + \sqrt{\frac{\alpha \log t}{\pulls_{i}(S_t^i,a_i)}} - \mu_i} = \\
&= \E{r(S_t, a_i) - \sqrt{\frac{\alpha \log t}{\pulls_{i}(S_t^i,a_i)}} + 2 \sqrt{\frac{\alpha \log t}{\pulls_{i}(S_t^i,a_i)}} - \mu_i} = \\
&= \E{r(S_t, a_i) - \sqrt{\frac{\alpha \log t}{\pulls_{i}(S_t^i,a_i)}} - \mu_i} + \E{2 \sqrt{\frac{\alpha \log t}{\pulls_{i}(S_t^i,a_i)}}} =  \\
&\stackrel{\text{(a)}}{\le} \P{r(S_t, a_i) - \mu_i \le \sqrt{\frac{\alpha \log t}{\pulls_{i}(S_t^i,a_i)}}} + \E{2 \sqrt{\frac{\alpha \log t}{\pulls_{i}(S_t^i,a_i)}}} \\
&\stackrel{\text{(c)}}{\le} \frac{1}{t^{2\alpha -1}} + \E{2 \sqrt{\frac{\alpha \log t}{\pulls_{i}(S_t^i,a_i)}}} \\
\end{align*}
The inequalities above follow readily using algebra; (a) and (b) follows by an identical argument to the proof of \cref{lem:proof_optimal}.

Considering just the state in which arm $a_i$ is pulled, it remains to upper bound the following expression:
\begin{gather*} 
\sum_{t=1}^T \E{2\, \one_{a_i = \pi(S_t)} \sqrt{\frac{\alpha \log t}{\pulls_{i}(S_t^i,a_i)}}} \le \sum_{t=1}^T \E{2\, \one_{a_i = \pi(S_t)} \sqrt{\frac{\alpha \log T}{\pulls_{i}(S_t^i,a_i)}}} \le \\
\stackrel{\text{(d)}}{\le} \sum_{t=1}^T \E{2\, \sqrt{\frac{\alpha \log T}{t}}} \stackrel{\text{(e)}}{\le} O(\sqrt{T \log T})~.
\end{gather*}
The inequality (d) follows by upper bounding the error by assuming that the arm was pulled in every step, and (e) follows by upper bounding the sum by an integral. See, for example, the proof of Proposition 2 in \citeasnoun{Russo2014b}.
\end{proof}


\end{document}